\newtheorem{theorem}{Theorem}
\newcommand{\E}[1]{\mathrm{E}\left(#1\right)}
\renewcommand{\H}[1]{\mathrm{H}\left(#1\right)}
\renewcommand{\P}{\mathrm{P}}
\newcommand{\kld}[2]{D_{\mathrm{KL}}\left(#1 \parallel #2 \right)}
\title{Building Universal Foundation Models for Medical Image Analysis with Spatially Adaptive Networks}
\author{
    Lingxiao Luo\and
    Xuanzhong Chen\and
    Bingda Tang\and
    Xinsheng Chen\and \\
    Rong Han\and
    Chengpeng Hu\and
    Yujiang Li\and
    Ting Chen
\affiliations
Tsinghua University
}
\begin{document}

\maketitle



\begin{abstract}
    Recent advancements in foundation models, typically trained with self-supervised learning on large-scale and diverse datasets, have shown great potential in medical image analysis.
    However, due to the significant spatial heterogeneity of medical imaging data, current models must tailor specific structures for different datasets, making it challenging to leverage the abundant unlabeled data.
    In this work, we propose a universal foundation model for medical image analysis that processes images with heterogeneous spatial properties using a unified structure.
    To accomplish this, we propose spatially adaptive networks (SPAD-Nets), a family of networks that dynamically adjust the structures to adapt to the spatial properties of input images, to build such a universal foundation model.
    We pre-train a spatial adaptive visual tokenizer (SPAD-VT) and then a spatial adaptive Vision Transformer (SPAD-ViT) via masked image modeling (MIM) on 55 public medical image datasets.
    The pre-training data comprises over 9 million image slices, representing the largest, most comprehensive, and most diverse dataset to our knowledge for pre-training universal foundation models for medical image analysis.
    The experimental results on downstream medical image classification and segmentation tasks demonstrate the superior performance and label efficiency of our model.
    Our code is available at \url{https://github.com/function2-llx/PUMIT}.
\end{abstract}

\section{Introduction}

\begin{figure}[htbp]
    \centering
    \includegraphics[width=.85\linewidth]{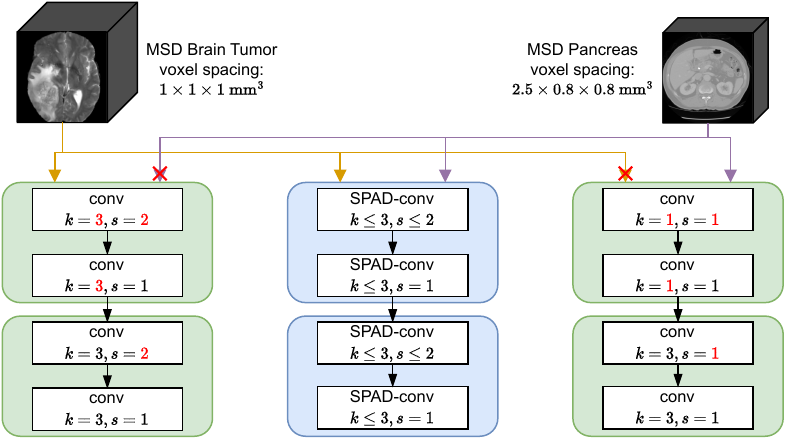}
    \caption{
        The first two stages of U-Net encoders designed by nnU-Net for Task 1 (left) and Task 5 (right) of MSD challenge~\protect\cite{MSD}, and the version of our proposed SPAD-Nets (middle).
        Convolution parameters (\(k\) for kernel size, \(s\) for stride) along the depth dimension are indicated.
        The structures designed by nnU-Net for datasets with different spacing have incompatible parts (marked as {\color{red} red}).
        The SPAD-Nets are able to handle images from both tasks by adapting structures to input spatial properties.
    }
    \label{fig:structures}
\end{figure}

Conventional deep learning models require a substantial amount of labeled data to be trained, but the availability of labeled data is often limited in domains such as health and medicine due to the expensive acquisition process.
Moreover, most of these learning models are developed in a task-specific manner, where individual models are designed and trained on a narrow range of data, further limiting the applications of models.
Recently, the field of artificial intelligence (AI) has witnessed a paradigm shift from training task-specific models to developing foundation models, which are typically trained on large-scale and diverse datasets using self-supervised learning (SSL), and can be applied to various downstream tasks via fine-tuning or prompting~\cite{bommasani2021opportunities}.
In general vision domain, for instance, without the need for expensive annotations, foundation models can be pre-trained on raw images with masked image modeling (MIM) and achieve promising results in downstream vision tasks such as classification, detection, and segmentation~\cite{mae_cvpr2022,eva_CVPR2023}.

However, general visual foundation models cannot be applied to medical images directly due to the significant differences between natural images and medical images.
While most natural images are 2D and of RGB modality, medical images are diverse in terms of imaging modalities such as X-ray, ultrasound, CT, and MRI, and dimensions as both 2D and 3D images are commonly used.
Although 2D and 3D convolutional neural networks (CNNs) have been used for 2D and 3D images, respectively, their model structures are inherently incompatible with each other due to different spatial dimensions.
Previous works attempted to break such ``dimensionality barrier" by switching to Vision Transformers (ViTs)~\cite{vit_iclr2021,unimiss}, but CNNs still play indispensable roles in ViTs as their patch embedding layers.
On the other hand, 3D medical images that are made up of stacked 2D slices can have a wide range of voxel spacing, that is, the physical size of a voxel, commonly represented by an in-plane spacing \(s_{\mathrm{plane}}\) and a slice thickness \(s_{\mathrm{slice}}\).
When the ratio between \(s_{\mathrm{slice}}\) and \(s_{\mathrm{plane}}\) (\(\nicefrac{s_{\mathrm{slice}}}{s_{\mathrm{plane}}}\)) is overly large, the image content may change substantially across slices.
Neural networks typically process images based on voxel grids, which are assumed to have isotropic voxel spacing, without considering the actual physical size of voxels.
As a result, most mainstream models adapt their structures to align with the voxel spacing statistics of individual datasets~\cite{nnunet}.
Yet, such practice can make model structures incompatible among datasets with different voxel spacing statistics, as illustrated in Figure~\ref{fig:structures}.
As a result, developing foundation models for medical image analysis is challenging because of the lack of unified model structures capable of handling medical imaging data with heterogeneous spatial properties.

In this work, we develop a universal foundation model for a broad range of medical images, including both 2D and 3D images with multiple imaging modalities and diverse voxel spacing.
To accomplish this, we introduce the concept of \textit{spatially adaptive networks} (SPAD-Nets), a family of networks that dynamically adjust their structures to adapt to the spatial properties of input images.
SPAD-Nets are implemented by replacing the building blocks of conventional networks, such as convolutions, with spatially adaptive ones.
Specifically, we propose \textit{spatially adaptive convolution} (SPAD-conv) to serve as the building blocks of SPAD-Nets.
A SPAD-conv shares the same set of parameters as a conventional 3D convolution, and its parameters are dynamically transformed to adapt to the input.
For instance, given an input image with a large \(\nicefrac{s_{\mathrm{slice}}}{s_{\mathrm{plane}}}\) value, a SPAD-conv will reduce its convolution kernel size and stride along the depth dimension accordingly.
With models built with SPAD-Nets, we pre-train a spatially adaptive visual tokenizer (SPAD-VT) and then a spatially adaptive ViT (SPAD-ViT) via masked image modeling (MIM) on large-scale public medical image datasets.
The contributions of our work are summarized as follows:
\begin{itemize}
    \item We propose SPAD-Nets, a family of networks used for building universal foundation models for medical image analysis. Based on SPAD-Nets, we manage to develop the SPAD-ViT, which is, to our knowledge, the first universal foundation model for medical image analysis. 
    \item We train a spatially adaptive visual tokenizer (SPAD-VT) followed by a SPAD-ViT on 55 public medical image datasets with over 9 million image slices. This is the largest, most comprehensive, and most diverse dataset to our knowledge for pre-training universal foundation models for medical image analysis.
    \item Our pre-trained models are fine-tuned on downstream medical image classification and segmentation tasks. The experimental results demonstrate the superior performance and label efficiency of our model.
\end{itemize}

\section{Related Works}



\paragraph{Universal Models for Medical Image Analysis}

A universal model capable of processing medical images with diverse spatial properties using a unified model structure is crucial for leveraging a large amount of unlabeled data available when developing foundation models.
One straightforward approach to building such a universal model is to use a 2D model to process multiple 2D slices split from a 3D volume \cite{joint_ssl_aaai2023}.
However, using a 2D model also requires additional mechanism to address the cross-plane contextual information inherent in 3D images.
UniMiSS \cite{unimiss} presents an alternative approach based on the ViT model, which converts the input 2D or 3D image into a sequence of 1D image patches using a switchable patch embedding (SPE) layer. The sequence is then processed by transformer blocks.
Additionally, the weight inflation technique \cite{pmlr-v193-zhang22a} is also proposed to adapt pre-trained weights from 2D models to 3D models, which inspires the development of SPAD-Nets.

\paragraph{Masked Image Modeling}

Inspired by the masked language modeling (MLM) in NLP \cite{devlin-etal-2019-bert}, masked image modeling (MIM) with transformer-based models has emerged as a viable SSL approach in the CV domain.
Some representative works include MAE \cite{mae_cvpr2022} with pixel value regression, BEiT family \cite{bao2022beit} with visual token reconstruction, and EVA series \cite{eva_CVPR2023,fang2023eva02} with vision feature prediction.
However, there have been fewer attempts to apply MIM to medical image analysis \cite{chen2023masked}, largely due to the significant differences between medical images and natural images.

\section{Preliminary}
\label{sec:pre}

In this section, we briefly review key concepts about spatial properties in medical images.
We then discuss how these concepts influence the design of current mainstream models for individual tasks of medical image analysis, and how they complicate the development of a foundation model.


A 3D medical image, such as a CT scan, is a stack of cross-sectional 2D image slices of a human body generated by a medical imaging device~\cite{Buzug2011}.
Similar to a typical digital image that can be represented by a 2D array of pixel values, each slice contains a 2D intensity array, and arrays of all slices are stacked to form the 3D array of voxel intensities for the whole 3D image.
The spatial dimensions within the slice plane are referred to as the \textit{in-plane} dimensions, and the dimension along which slices are stacked is referred to as the \textit{depth} dimension.
A 3D medical image contains information on \textit{voxel spacing}, which is the physical size of a voxel along each spatial dimension and is represented as \(s_{\mathrm{slice}} \times s_{\mathrm{h}} \times s_{\mathrm{w}}~(\mathrm{mm}^3)\).
Here, \(s_{\mathrm{slice}}\) is the slice thickness (or equivalently, the distance between adjacent slices), \(s_{\mathrm{h}}\) and \(s_{\mathrm{w}}\) are spacings along in-plane dimensions. 
In our work, we treat 2D images as a special case of 3D images with only one slice and a sufficiently large \(s_{\mathrm{slice}}\).
Moreover, all images in this work have an isotropic in-plane spacing, \(s_{\mathrm{h}} = s_{\mathrm{w}}\), which we denote as \(s_{\mathrm{plane}}\).

\begin{figure*}[!t]
    \centering
    \includegraphics[width=.9\linewidth]{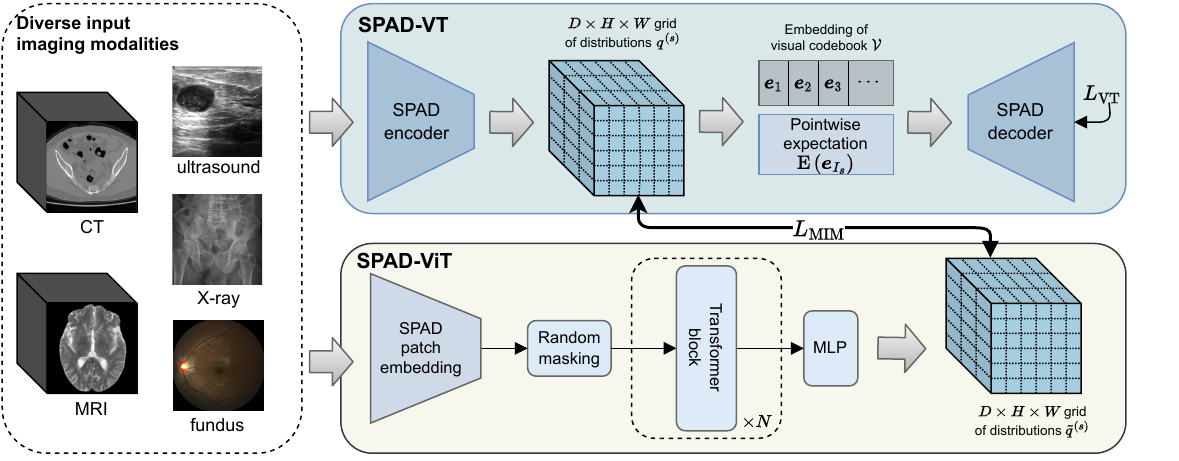}
    \caption{
        Illustration of our pre-training framework for both SPAD-VT and SPAD-ViT.
        The models built with our proposed SPAD-Nets can process a wide range of images using a unified model structure.
        Note that SPAD-VT is trained first (optimizing \(L_{\mathrm{VT}}\)), and is fixed during the training of SPAD-ViT (optimizing \(L_{\mathrm{MIM}}\)).
    }
    \label{fig:pumit}
\end{figure*}

The voxel spacing is crucial for accurate and comprehensive feature extraction from a medical image, such as measuring the physical volume of an object or the distance between objects.
However, general neural networks only operate on the voxel grids of the intensity array that are assumed to have isotropic voxel spacing but disregard the actual physical size of voxels.
To adapt a model to a dataset, nnU-Net~\cite{nnunet}, the widely used approach for medical image segmentation, proposes the following two empirical strategies:
\begin{enumerate*}[label=(\arabic*)]
    \item Resampling all images to the same target spacing, such as the median value of spacings of all training images.
    \item Adapting the network structure when the voxel spacing is significantly anisotropic, that is, the ratio between \(s_{\mathrm{slice}}\) and \(s_{\mathrm{plane}}\) is far from (mostly, much larger than) \(1\).
    Take the downsampling stage in the U-Net encoder illustrated in Figure~\ref{fig:structures} as an example: the feature map will not be downsampled along the depth dimension at a stage if the value of \(\nicefrac{s_{\mathrm{slice}}}{s_{\mathrm{plane}}}\) is larger than \(2\).
\end{enumerate*}

These strategies have been validated to be effective for individual tasks by nnU-Net with extensive experiments and largely adopted by later works in 3D medical image analysis~\cite{nnformer2023tip,Brugnara2023nc}.
However, both strategies become infeasible when developing a foundation model on large-scale and diverse datasets due to the following reasons:
\begin{enumerate*}[label=\arabic*)]
    \item Resampling all images from diverse datasets to the same target spacing will result in significant information loss and resampling artifacts, which can destroy delicate details in medical images.
    \item The network structures adapted for individual tasks are static and can be incompatible with other datasets, whereas a unified structure is essential to leverage the large-scale data together.
\end{enumerate*}
In conclusion, it is critical to develop new strategies to address the heterogeneous spatial properties of medical images that are applicable across diverse datasets for the development of foundation models for medical image analysis.

\section{Methodology}

\subsection{Overview}

Our solution to the challenge described in Section~\ref{sec:pre} is to design a universal model for a broad range of medical images with diverse spatial properties, built with our proposed SPAD-Nets.
At each stage of the model, the structure can be dynamically adjusted to the one that is adapted to the input.
We are then able to devise a pre-training framework based on masked image modeling (MIM) with visual token reconstruction~\cite{bao2022beit}, illustrated in Figure~\ref{fig:pumit}.
For an input image, a pre-trained visual tokenizer converts it to a sequence of tokens, where each token represents a patch on the image.
With random patches on the image masked out, the training objective is to predict tokens of masked patches given the visible patches.
As there is no pre-trained visual tokenizer for medical images available, especially a universal one, we also focus on the development of a visual tokenizer first.

\subsection{SPAD-Nets}
\label{sec:method-spad-nets}

SPAD-Nets are implemented by replacing the building blocks of conventional networks with spatially adaptive ones.
Specifically, we propose the \textit{spatially adaptive convolution} (SPAD-conv) to serve as the convolutional building blocks of SPAD-Nets.
In general, a SPAD-conv shares the same set of parameters as a conventional 3D convolution, which is termed as its \textit{base network} and has isotropic convolution kernel sizes and strides across all spatial dimensions.
When encountering an input with anisotropic spacing, the parameters are transformed accordingly to adapt to the input before applying to it.
The design idea of SPAD-conv is largely inspired by and modified from the empirical rules of designing a static network structure adapting to images of a dataset proposed by nnU-Net.
Given an input feature map, let \(s_{\mathrm{plane}}\) and \(s_{\mathrm{slice}}\) be the in-plane spacing and slice thickness as introduced in Section~\ref{sec:pre}, respectively.
The general principle for designing SPAD-conv can be summarized as follows:
\begin{enumerate*}[label=(\arabic*)]
    \item if \(s_{\mathrm{slice}} < 2s_{\mathrm{plane}}\), the anisotropy of the input is considered to be tolerable and the input can be directly processed by the base network;
    \item otherwise, when \(s_{\mathrm{slice}} \ge 2s_{\mathrm{plane}}\), the input is considered to be overly anisotropic where changes across slices can be significant, and the base network must adjust the structure to reduce the information aggregation across the depth dimension.
\end{enumerate*}
Based on this principle, we design three types of SPAD-conv: downsampling, k3s1, and upsampling, whose base networks are used as fundamental components in common vision models, such as U-Net.
We now detail these types of SPAD-conv with an example of turning a conventional U-Net into a SPAD-Nets counterpart.




\paragraph{Downsampling}

In the U-Net encoder, the feature map is typically downsampled with a ratio of 2 at each stage through a convolution with a stride of 2 and a kernel size of 2 or 3.
When using the downsampling type of SPAD-conv with the same convolution as base network, given an overly anisotropic input (\(s_{\mathrm{slice}} \ge 2s_{\mathrm{plane}}\)), the downsampling along the depth dimension will be disabled.
Specifically, the weight of convolution kernel along the depth dimension is reduced to the size of \(1\) through sum pooling, and the stride along the depth dimension is similarly adjusted to \(1\).
The voxel spacing of the output feature map is multiplied by the adjusted stride.

\paragraph{K3s1}

The convolution with a kernel size of \(3\) and a stride of 1, abbreviated as k3s1 by us, is used in both the U-Net encoder and decoder for feature extraction.
When using the k3s1 type of SPAD-conv with the same convolution as base network, given an overly anisotropic input (\(s_{\mathrm{slice}} \ge 2s_{\mathrm{plane}}\)), the feature aggregation across the depth dimension is disabled by reducing the weight of convolution kernel along the depth dimension to the size of 1 through sum pooling.

\paragraph{Upsampling}

In the U-Net decoder, the feature map is typically upsampled with a ratio of 2 at each stage through a transposed convolution with a stride of 2 and a kernel size of 2 or 3.
When using the upsampling type of SPAD-conv with the same transposed convolution as base network, however, the feature map is upsampled along the depth dimension if and only if it is downsampled along the depth dimension within the corresponding symmetric stage in the encoder.
This is because the upsampling is considered to be a reverse process of downsampling in terms of spatial properties.

We specifically employ sum pooling for weight reduction to maintain the mean and variance of output feature values, similar to the weight inflation technique used to adapt pre-trained weights from 2D models to 3D models~\cite{pmlr-v193-zhang22a}.
While we have demonstrated the integration of SPAD-conv into U-Net for clarity, we note that SPAD-conv can be similarly applied to a wide range of convolution-based networks and convert them to SPAD-Nets.

\paragraph{Generalized Resampling}
Beyond the aforementioned usages in U-Net, both downsampling and upsampling types of SPAD-conv can be generalized with a base convolution network with both kernel size and stride being \(2^k\), where \(k\) is a positive integer.
Such network resamples the feature map with a ratio of \(2^k\), and it can be used as the patch embedding in ViT or the output layer for some dense prediction tasks, such as segmentation and reconstruction.
We define the \textit{degree of anisotropy} (DA) of a feature map to be the minimum number of normal downsampling with a ratio of 2 performing on it to have \(s_{\mathrm{slice}} < 2s_{\mathrm{plane}}\) after downsampling, which yields \(\mathrm{DA}=\max\left \{0, \left \lfloor \log_2 \nicefrac{s_{\mathrm{slice}}}{s_{\mathrm{plane}}} \right \rfloor\right\}\).
Since such a network can be viewed as \(k\) consecutive normal resampling, for generalized downsampling, the effective number of downsampling along the depth dimension is therefore reduced to \(k_0=\max\{k - \mathrm{DA}, 0\}\), with corresponding kernel size and stride reduced to \(2^{k_0}\).
The generalized upsampling can be achieved similarly by reversing the process.

\subsection{Visual Tokenizer Development}
\label{sec:method-pt}

We first briefly review the concept of the visual tokenizer.
The visual tokenizer is implemented using a VQ-VAE \cite{vqvae}, a variant of VAE \cite{vae} where the latent variables follow discrete categorical distributions over the visual codebook $\mathcal{V}$, a finite set of tokens.
Specifically, an encoder learns a posterior distribution $q(z \mid x)$ to output a \(D \times H \times W\) grid of categorical token distributions over \(\mathcal{V}\) given an image \(x\).
Token samples are drawn from these categorical distributions, and converted to a $d$-dimensional embedding vector from an 
embedding space \(\bm{e} \in \mathbb{R}^{|\mathcal{V}| \times d}\), known as the vector quantization (VQ) process.
A decoder learns to reconstruct the original image given the embeddings of the drawn token samples.

We build a spatially adaptive visual tokenizer (SPAD-VT) as follows.
The encoder resembles a typical convolutional backbone, but is equipped with k3s1 and downsampling types of SPAD-conv.
Given an input image with voxel spacing information, it outputs a feature map with \(\nicefrac{1}{16}\) of the original resolution on the in-plane dimensions and \(
    \min\left\{\nicefrac{2^{\mathrm{DA}}}{16}, 1\right\}
\) of the original resolution on the depth dimension.
The decoder is composed of a stack of k3s1 and upsampling types of SPAD-conv.
It gradually upsamples the quantized feature map output by the encoder, and finally, it outputs a reconstruction of the original image. To make the VQ process differentiable, token samples are typically drawn from the categorical distributions using Gumbel-Softmax reparameterization~\cite{maddison2017the,jang2017categorical} and straight-through estimators~\cite{bengio2013estimating}.
However, these techniques can suffer from issues of severe codebook collapse, low codebook utilization, and perturbed reconstruction process introduced by stochastic quantization~\cite{Zhang_2023_CVPR}.
We propose several enhancements to make SPAD-VT produce more robust tokenization output.

\paragraph{Soft Token Representation}

Instead of sampling a discrete token from the categorical distribution, we directly represent a \textit{soft token} with the distribution.
The embedding of a soft token is calculated as the expectation of the token embedding.
Specifically, let random variable $I$ be the index of a token in the codebook.
The soft token embedding is given as:
\begin{equation}
    \E{\bm{e}_{I}} = \sum_{i=1}^{|\mathcal{V}|} \P(I=i)\bm{e}_i.
\end{equation}
This strategy could mitigate the codebook collapse issue, while still preserving a deterministic reconstruction process.

\paragraph{Dual Prior Distribution Regularization}

To mitigate the low codebook utilization issue, we adopt the prior distribution regularization (PDR) technique~\cite{Zhang_2023_CVPR} and generalize it to the soft token representation.
Let \(q^{(s)}\) be the distribution of the token with spatial position \(s\) output by the encoder.
PDR assumes the prior that the codebook should be uniformly utilized.
Thus, the averaged distribution of \(q^{(s)}\) of all spatial positions should be close to the uniform distribution $p_{\mathrm{prior}}$.
The objective of PDR can be represented as minimizing $\kld{\E{q^{(S)}}}{p_{\mathrm{prior}}}$, where \(S\) is a uniformly distributed random variable of the token's spatial position.
An interpretation of PDR is available in the appendix.
On the other hand, to prevent all distributions collapsing to the uniform distribution, we introduce another dual objective maximizing $\E{\kld{q^{(S)}}{p_{\mathrm{prior}}}}$ to increase the sharpness of the learned distributions.
Since $p_{\mathrm{prior}}$ is a uniform distribution, it can be shown that for any distribution $p$, $\kld{p}{p_{\mathrm{prior}}}=-\H{p} + \ln \left \lvert \mathcal{V} \right \rvert$, where $\H{p} = -\sum_{i=1}^{|\mathcal{V}|} p_i \ln p_i$ (entropy of \(p\)).
Therefore, the dual PDR is implemented to minimize the following objective:
\begin{equation}
    L_{\mathrm{reg}} = -\lambda_1 \H{\E{q^{(S)}}} + \lambda_2\E{\H{q^{(S)}}},
\end{equation}
where $\lambda_1, \lambda_2$ are positive hyper-parameters.
The final optimization objective for SAPD-VT is a weighted combination of the objective of VQGAN~\cite{VQGAN} and \(L_{\mathrm{reg}}\).

\subsection{Universal Foundation Model}

With previous preparation, we are now able to realize a universal foundation model for medical image analysis by building a spatially adaptive ViT (SPAD-ViT) and using the SPAD-VT to perform MIM with visual token reconstruction.

\paragraph{Architecture}

We employ a plain ViT macro architecture that is composed of a patch embedding layer and a stack of transformer blocks~\cite{vit_iclr2021}.
The patch embedding layer divides the input image into a sequence of non-overlapping patches and linearly projects each patch to an embedding vector.
This process is essentially equivalent to processing the input image with a downsampling convolution that has identical kernel size and stride, and can be replaced by a downsampling type of SPAD-conv.
The micro architecture of the transformer block follows EVA-02~\cite{fang2023eva02}, which incorporates several advancements in language models including the 2D rotary position embedding (RoPE) \cite{RoPE}.
We further extend 2D RoPE to work with 3D medical images, as detailed in the appendix.

\paragraph{MIM Pre-training}

We pre-train the SPAD-ViT via visual token reconstruction with our pre-trained SPAD-VT, which is similar to BEiT~\cite{bao2022beit}, but the visual tokens are in soft representations.
For a given 3D image $x$, it is converted into a \(D \times H \times W\) grid of token distributions \(\left(q^{(s)}\right)_{s \in G}\) by the pre-trained SPAD-VT, where \(G\) is the set of all spatial positions on the grid.
On the other hand, the patch embedding layer of ViT also divides $x$ into corresponding $D \times H \times W$ patches.
We randomly mask out 55\% of the patches and replace their embeddings with a learnable embedding \(\bm{e}_{\mathrm{mask}}\).
Transformer blocks then process the corrupted patch embeddings and output a \(D \times H \times W\) grid of hidden vectors.
These hidden vectors are projected to the categorical distributions \(\left(\tilde{q}^{(s)}\right)_{s \in G}\) over $\mathcal{V}$ through a multi-layer perceptron (MLP) with a single hidden layer.
The pre-training objective is minimizing the cross entropy of the reconstructed distribution relative to the distribution output by the SPAD-VT:
\begin{equation}
L_{\mathrm{MIM}} = \frac{1}{\left\lvert G_{\mathrm{M}} \right\rvert} \sum_{s \in G_{\mathrm{M}}} \sum_{i=1}^{\left\lvert \mathcal{V} \right\rvert} q^{(s)}_i\ln \tilde{q}^{(s)}_i,
\end{equation}
where $G_{\mathrm{M}} \subset G$ is the set of positions of all masked patches.
We intentionally choose a higher masking ratio for 3D medical images compared to previous works on 2D images.
Since patches from multiple slices may provide additional information for reconstruction, a higher masking ratio can help balance the difficulty of the pre-training task.

\section{Experiments}

\subsection{Pre-training}

\paragraph{Dataset}

We collect 55 public medical image datasets for both SPAD-VT and SPAD-ViT pre-training, including 15 datasets of 2D images and 40 datasets of 3D images.
The collected data include a wide range of image modalities (CT, MRI, X-ray, ultrasound and fundus photography) and body parts (abdomen, brain, breast, cervix, chest, fetal, fundus, head-neck, heart, pelvis, prostate, and spine), with diverse spatial properties.
We compare the size and imaging modalities of data for pre-training with related works in Table~\ref{table:data-cmp}.
We pre-process the data to a unified format, including moving the depth dimension of an image to the first spatial dimension.
During pre-training, we only use raw images in a self-supervised manner without using any annotations or labels from the datasets.
For each modality in each dataset, we randomly exclude one image from training data for validation.
The details of all datasets used for pre-training and the steps of pre-processing are provided in the appendix.

\begin{table}[htbp]
    \centering
    \begin{tabular}{c|ccc}
        \toprule
        \textbf{Work} & \textbf{\#3D} & \textbf{\#2D} & \textbf{Modality} \\ 
        \midrule
        Swin UNETR & 5,050 & 0 & CT \\
        SMIT & 3,643 & 0 & CT \\
        \citeauthor{joint_ssl_aaai2023} & 3,694 & 108,948 & CT, XR \\
        UniMiSS & 5,022 & 108,948 & CT, XR \\
        \midrule
        Ours & 48,344 & 348,991 & \begin{tabular}[c]{@{}c@{}}CT, MRI, \\ XR, US, FP\end{tabular} \\
        \bottomrule
    \end{tabular}
    \caption{
        The size and imaging modalities of data used for pre-training in related works and this work.
        Note: \#3D and \#2D: the number of 3D and 2D images, respectively; XR: X-ray; US: ultrasound; FP: fundus photography.
    }
    \label{table:data-cmp}
\end{table}

\begin{table*}[!t]
    \centering
    \fontsize{9}{11}\selectfont
    \begin{tabular}{c|cc|cc|cc|cc|cc|cc|cc}
    \toprule
    \multirow{2}{*}{\textbf{Method}} & \multicolumn{2}{c|}{\textbf{Organ}} & \multicolumn{2}{c|}{\textbf{Nodule}} & \multicolumn{2}{c|}{\textbf{Fracture}} & \multicolumn{2}{c|}{\textbf{Adrenal}} & \multicolumn{2}{c|}{\textbf{Vessel}} & \multicolumn{2}{c|}{\textbf{Synapse}} & \multicolumn{2}{c}{\textbf{Average}} \\
 & AUC & ACC & AUC & ACC & AUC & ACC & AUC & ACC & AUC & ACC & AUC & ACC & AUC & ACC \\
\midrule
ResNet-50+3D & 99.4 & 88.3 & 87.5 & 84.7 & 72.5 & 49.4 & 82.8 & 74.5 & 90.7 & 91.8 & 85.1 & 79.5 & 86.3 & 78.0 \\
ResNet-50+ACS & 99.4 & 88.9 & 88.6 & 84.1 & 75.0 & 51.7 & 82.8 & 75.8 & 91.2 & 85.8 & 71.9 & 70.9 & 84.8 & 76.2 \\
auto-sklearn & 97.7 & 81.4 & 91.4 & 87.4 & 62.8 & 45.3 & 82.8 & 80.2 & 91.0 & 91.5 & 63.1 & 73.0 & 81.5 & 76.5 \\
UniMiSS & \textbf{99.8} & 94.9 & 89.9 & 84.2 & 70.4 & 50.8 & 87.0 & 81.2 & 93.8 & 94.0 & 85.8 & 81.8 & 87.8 & 81.2 \\
SMIT & 99.5 & 91.1 & 88.4 & 85.2 & 63.1 & 49.6 & 84.8 & \textbf{82.2} & 89.7 & 91.1 & 70.8 & 73.3 & 82.7 & 78.8 \\
EVA-02-B & 99.2 & 88.0 & 91.0 & 85.8 & 75.2 & \textbf{56.7} & 86.7 & 79.5 & 95.8 & \textbf{95.6} & 72.1 & 75.3 & 86.7 & 80.2 \\
Ours & \textbf{99.8} & \textbf{95.6} & \textbf{94.3} & \textbf{88.4} & \textbf{77.6} & 55.4 & \textbf{87.8} & 81.2 & \textbf{97.3} & \textbf{95.6} & \textbf{96.4} & \textbf{91.5} & \textbf{92.2} & \textbf{84.6}
     \\
    \bottomrule
    \end{tabular}
    \caption{
        Classification results on the test sets of the 3D image datasets from MedMNIST v2.
        The results of ResNet-50 and auto-sklearn are taken from the original paper~\protect\cite{medmnistv2}.
        Both AUC and ACC values are shown in percentages.
    }
    \label{table:cls_3d}
\end{table*}

\paragraph{Implementation Details}

Our models are implemented based on MONAI \cite{cardoso2022monai}.
For SPAD-VT, we set codebook size $|\mathcal{V}|=1024$.
During pre-training, we randomly crop a $\max\{64 / 2^{\mathrm{DA}}, 1\} \times 128 \times 128$ patch from an image given its DA.
We adopt a batch sampling strategy in which all images within a batch are selected to have the same DA, ensuring that they have the same cropped size and thus can be processed by the network in parallel.
We train SPAD-VT for 600K steps with the batch size of 48 and the peak learning rate of $10^{-4}$.
For SPAD-ViT pre-training, we randomly crop a $\max\{80 / 2^{\mathrm{DA}}, 1\} \times 160 \times 160$ patch from an image, and use the same batch sampling strategy as the SPAD-VT training.
We initialize SPAD-ViT with the EVA-02-B pre-trained weights and train it for 300K steps with a batch size of 84 and a peak learning rate of $2 \times 10^{-5}$.
All pre-trained models are trained with AdamW optimizer \cite{loshchilov2018decoupled}.

\begin{figure}[!t]
    \centering
    \includegraphics[width=\linewidth]{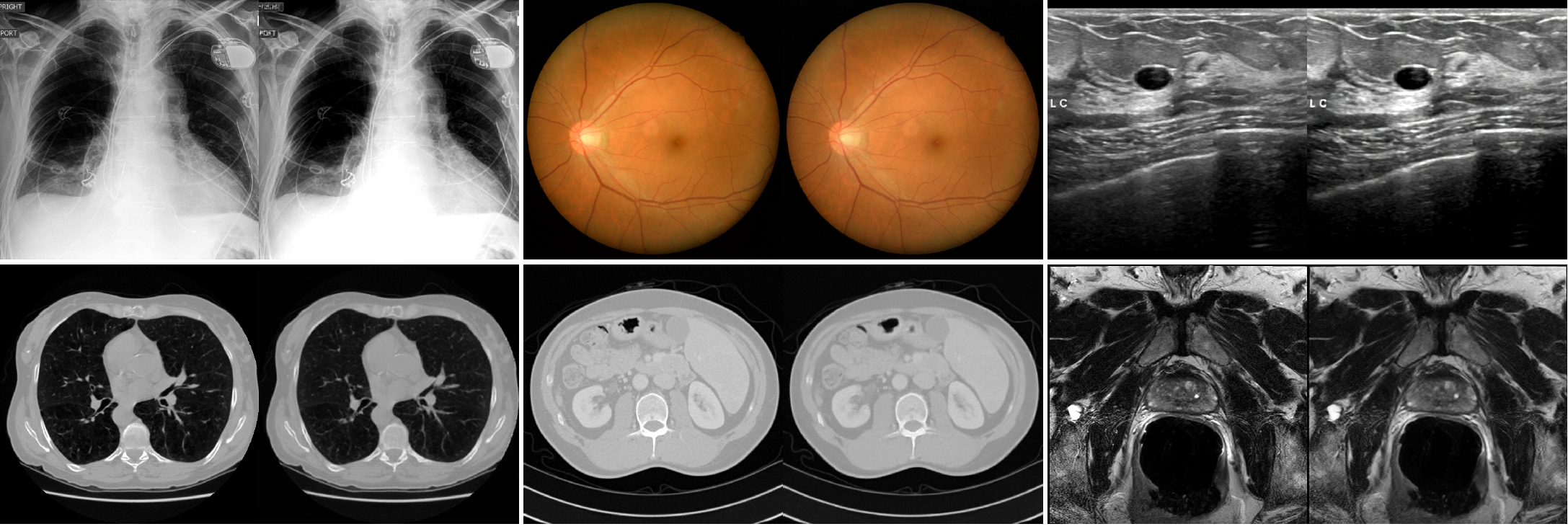}
    \caption{
        Reconstruction results of SPAD-VT on unseen images.
        The images are taken from different body parts with diverse imaging modalities and spatial properties.
        For each pair of images, the original image is on the left and the reconstructed image is on the right.
        The first row:
            (left) chest X-ray image from  CheXpert~\protect\cite{chexpert};
            (middle) fundus photograph from GAMMA;
            (right) breast ultrasound image from BUSI~\protect\cite{BUSI}.
        The second row:
            (left): lung CT image from LIDC-IDRI~\protect\cite{LIDC-IDRI};
            (middle): abdomen CT image from FLARE 2022;
            (right): prostate MRI image from PROSTATE-MRI~\protect\cite{prostate-mri}.
        More examples are available in the appendix.
    }
    \label{fig:tokenizer-rec}
\end{figure}

\begin{figure}[htbp]
    \centering
    \includegraphics[width=.9\linewidth]{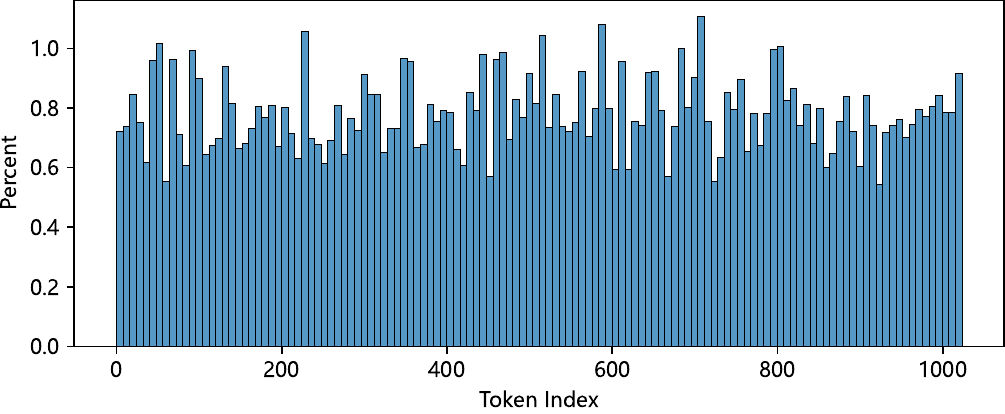}
    \caption{
        Token utilization histogram.
        Every eight tokens with consecutive indexes are merged into one rectangle for better visualization within the limited page width.
    }
    \label{fig:tokenizer-util}
\end{figure}

\paragraph{Results}

We demonstrate the reconstruction results of SPAD-VT on unseen validation images in Figure~\ref{fig:tokenizer-rec}.
SPAD-VT could process a wide range of medical images with heterogeneous imaging modalities and spatial properties using a unified model structure.
As shown in Figure~\ref{fig:tokenizer-rec}, images from diverse sources can be consistently reconstructed from the learned soft token representations with rich details preserved.
Moreover, we conduct the following analyses to evaluate the diversity and sharpness of the learned distributions.
First, we plot the codebook utilization on validation images in Figure~\ref{fig:tokenizer-util}.
The usage percentages of all tokens are in the same order of magnitude, indicating a balanced utilization of the codebook.
Second, we calculate the average value of $\E{\kld{q^{(S)}}{p_{\mathrm{prior}}}}$ on validation images as 2.791, while $\H{p_{\mathrm{prior}}}=\ln |\mathcal{V}|\approx6.931$.
This result indicates that the learned distributions preserve sharpness to some extent.

\subsection{Classification Experiments}

\paragraph{Dataset}

MedMNIST v2 \cite{medmnistv2} is a large-scale benchmark for medical image classification on standardized MNIST-like 2D and 3D images with diverse modalities, dataset scales, and tasks.
The size of each image is either $28 \times 28$ (2D) or $28 \times 28 \times 28$ (3D).
We primarily experiment on the 3D portion of MedMNIST v2, namely the Organ \cite{organmnist1,organmnist2}, Nodule \cite{LIDC-IDRI}, Fracture \cite{fracturemnist3d}, Adrenal, Vessel \cite{vesselmnist3d}, and Synapse datasets.
We also experiment on 3 tasks from the 2D portion, namely the Derma~\cite{dermamnist1,dermamnist2}, Retina~\cite{retinamnist}, and Tissue~\cite{tissuemnist} datasets.

\paragraph{Implementation Details}

Like common ViT-based models, our model performs classification tasks by feeding the hidden state of the \texttt{[CLS]} token into a fully-connected layer.
We conduct classification experiments by fine-tuning from our and several other self-supervised pre-trained models on each dataset.
Following the official split of training/validation/test sets, we train all models on the training sets for 100 epochs.
The model weights from the epoch with the highest AUC score on the validation set are selected for comparison on the test set.
The experimental details are given in the appendix.


\paragraph{Results}

Table~\ref{table:cls_3d} compares the classification performance of our model with other models on the 3D portion using area under the ROC curve (AUC)  and accuracy (ACC) metrics.
Our model consistently outperforms all other models across all datasets in terms of AUC, and achieves the highest ACC on 4 out of 6 datasets.
On average, our model improves AUC by at least 4.4\% and accuracy by at least 3.4\% across all datasets.
Notably, on the Synapse dataset, our model improves the performance by a significantly large margin with 10.6\% of AUC and 9.7\% of ACC.
The imaging modality of the Synapse dataset, electron microscope, is not covered by our pre-training data, showing the generalizability of our model.
Table~\ref{table:cls_2d} shows the classification performance on 2D tasks.
Our model outperforms all other baselines except for EVA-02 with a large margin, and is comparable to EVA-02.
We hypothesize that the large-scale pre-training in general vision domain and the natural language supervision of EVA-02 can benefit 2D medical image classification;
however, the results are still inferior on 3D images due to significant distribution shift.

\begin{table}[htbp]
    \centering
    \fontsize{9}{11}\selectfont
    \setlength{\tabcolsep}{2pt}
    \begin{tabular}{c|cc|cc|cc|cc}
    \toprule
    \multirow{2}{*}{\textbf{Method}} & \multicolumn{2}{c|}{\textbf{Derma}} & \multicolumn{2}{c|}{\textbf{Retina}} & \multicolumn{2}{c|}{\textbf{Tissue}} & \multicolumn{2}{c}{\textbf{Average}} \\
 & AUC & ACC & AUC & ACC & AUC & ACC & AUC & ACC \\
\midrule
ResNet-50 & 91.2 & 73.1 & 71.6 & 51.1 & 93.2 & 68.0 & 85.3 & 64.1\\
AutoML Vision & 91.4 & 76.8 & 75.0 & 53.1 & 92.4 & 67.3 & 86.3 & 65.7\\
UniMiSS & 90.9 & 74.1 & 71.2 & 52.5 & 92.8 & 67.6 & 85.0 & 64.7\\
EVA-02-B & 94.5 & \textbf{80.1} & \textbf{77.8} & 54.5 & \textbf{95.1} & \textbf{73.5} & \textbf{89.1} & \textbf{69.4}\\
Ours & \textbf{94.6} & 79.2 & 77.4 & \textbf{55.3} & 94.9 & 73.0 & 89.0 & 69.2\\
    \bottomrule
    \end{tabular}
    \caption{
        Classification results on the test sets of the 2D image datasets from MedMNIST v2.
        The results of ResNet-50 and Google AutoML Vision are taken from the original paper~\protect\cite{medmnistv2}.
        Both AUC and ACC values are shown in percentages.
    }
    \label{table:cls_2d}
\end{table}

\subsection{Segmentation Experiments}

\paragraph{Dataset}

We use the following two datasets to evaluate the segmentation performance.
1) The Beyond the Cranial Vault (BTCV) dataset, which contains 30 CT scans with 13 classes of regions annotated. 
We follow the train/validation split of SMIT \cite{smit} to use 21 scans for training and 9 scans for validation.
2) The MRI subset of Combined Healthy Abdominal Organ Segmentation (CHAOS) dataset, which contains 40 pairs of T1-DUAL MRI images and 40 T2-SPIR MRI images with 4 classes of regions annotated.
The segmentation labels of 20 pairs of T1-DUAL images and 20 T2-SPIR images are available for training, while the labels for remaining cases are hidden for online testing.

\paragraph{Implementation Details}

For both tasks, we resample the images to the median spacing among the training data. 
Our pre-trained SPAD-ViT is used to initialize the backbone, which extracts feature maps with the fixed in-plane scale of \(\frac{1}{16}\).
To build hierarchical feature maps for the segmentation head, following the implementation of EVA-02 \cite{fang2023eva02}, we process the extracted feature maps with transposed convolutions to obtain feature maps with the in-plane scales of $\frac{1}{8}$ and $\frac{1}{4}$, and a max-pooling layer to obtain a feature map with the in-plane scale of $\frac{1}{32}$.
Subsequently, a plain U-Net decoder processes the hierarchical feature maps to obtain the segmentation output.
All segmentation models for both tasks are trained for 50K steps and the final weights are used for evaluation.
Dice coefficient, average symmetric surface distance (ASSD), and Hausdorff distance (HD) are used to evaluate results for both tasks.
The details on experimental settings and evaluation metrics are given in the appendix.

\paragraph{Results}

\begin{table}[htbp]
    \centering
    \fontsize{9}{11}\selectfont
    \setlength{\tabcolsep}{4.1pt}
    \begin{tabular}{c|ccc|ccc}
    \toprule
    \multirow{2}{*}{\textbf{Method}} & \multicolumn{3}{c|}{\textbf{BTCV} (validation)}          & \multicolumn{3}{c}{\textbf{CHAOS} (hidden test)} \\
               & Dice  & ASSD \(\downarrow\) & HD \(\downarrow\) & Dice  & ASSD & HD    \\
    \midrule
    TFS        & 84.86 & 1.59 & 18.90 & 89.96 & 2.18 & 25.39 \\
    Swin UNETR & 81.06 & 3.20 & 92.33 & 85.51 & 6.66 & 68.02 \\
    UniMiSS    & 85.17 & 1.01 & 17.78 & 90.49 & 1.91 & 24.94 \\
    EVA-02-B   & 86.51 & 0.86 & 16.91 & 91.48 & 1.71 & 20.69 \\
    Ours                             & \textbf{87.31} & \textbf{0.74} & \textbf{13.12} & \textbf{92.09} & \textbf{1.54} & \textbf{19.85} \\
    \bottomrule
    \end{tabular}
    \caption{
        Segmentation results on the validation set of BTCV and hidden test set of CHAOS.
        ``TFS" indicates training a plane ViT backbone from scratch.
        The Dice values are shown in percentages.
    }
    \label{table:seg}
\end{table}

Table~\ref{table:seg} compares the segmentation performance of our model with other models on the BTCV validation set and CHAOS hidden test set.
The models for comparison include Swin UNETR, the previous SOTA on MSD~\cite{MSD}.
Our model consistently outperforms all other models in terms of all 3 metrics for both tasks.
On the BTCV task, our model improves the HD metric by a large margin of \(3.19\).
On the CHAOS tasks, our model could still achieve evident improvement although the baseline performance is almost saturate.

\begin{table}[htbp]
    \centering
    \fontsize{9}{11}\selectfont
    \setlength{\tabcolsep}{3pt}
    \begin{tabular}{c|ccc|ccc|ccc}
    \toprule
    \multirow{2}{*}{$r$} & \multicolumn{3}{c|}{Dice} & \multicolumn{3}{c|}{ASSD} & \multicolumn{3}{c}{HD} \\
     & TFS & Ours & $\Delta$ & TFS & Ours & $\Delta$ & TFS & Ours & $\Delta$ \\
    \midrule
    0.2 & 70.98 & 77.84 & 6.85 & 4.96 & 2.39 & 2.57 & 51.13 & 25.56 & 25.57 \\
    0.4 & 75.11 & 81.28 & 6.17 & 4.67 & 2.41 & 2.26 & 43.85 & 21.71 & 22.14 \\
    0.6 & 80.73 & 84.87 & 4.14 & 2.74 & 1.23 & 1.51 & 31.70 & 18.03 & 13.67 \\
    0.8 & 84.73 & 86.96 & 2.23 & 1.61 & 0.78 & 0.83 & 23.78 & 13.66 & 10.12 \\
    1.0 & 84.86 & 87.31 & 2.45 & 1.59 & 0.74 & 0.86 & 18.90 & 13.12 & 5.78  \\
    \bottomrule
    \end{tabular}
    \caption{
        Segmentation results on the BTCV validation set comparing models trained from scratch (TFS) with those fine-tuned from our pre-trained weights, using different ratios $r$ of training data.
        The columns of ``$\Delta$" indicate the amount of improvement.
    }
    \label{table:seg-eff}
\end{table}

\paragraph{Few-shot Learning}

To investigate how our model improves the label efficiency of the downstream tasks, we compare the model trained from scratch and fine-tuned from our model with different ratios of training data in Table~\ref{table:seg-eff}.
Fine-tuning from our pre-trained model consistently outperforms training from scratch with a large margin across all metrics, and the improvements are more significant when fewer data are available.
Moreover, our pre-trained model could effectively mitigate the requirement for labeled data.
As shown in Table~\ref{table:seg-eff}, even when fine-tuning with 60\% of the training data, our pre-trained model still achieves better results than a model trained from scratch using the full dataset.
Notably, our model still preserves decent performance with only 4 labeled samples (20\% of the training data).

\section{Conclusion}

In this work, we develop the first universal foundation model for medical image analysis, to our knowledge.
To address the heterogeneous spatial properties of medical images, we introduce the concept of SPAD-Nets, a family of networks which dynamically adjust the structures to adapt to the spatial properties of input images.
We build SPAD-ViT which is able to leverage 55 diverse datasets for pre-training, representing the most extensive pre-training effort for universal foundation models for medical image analysis to our knowledge.
In addition, we develop SPAD-VT, a universal visual tokenizer for a broad range of medical images, offering a robust reconstruction objective for MIM, and may also be used in multimodal models in future works.
The pre-trained SPAD-ViT is demonstrated with superior downstream performance and label efficiency.
We hope our work could inspire future works on developing foundation models for medical image analysis.

\appendix


\bibliographystyle{named}
\bibliography{ijcai24}

\clearpage


\section{Details on RoPE}

\subsection{From 1D to 2D RoPE}
Positional embeddings play a crucial role in transformer models \cite{attention-is-all-you-need}.
RoPE \cite{RoPE} is a kind of position embedding based on rotation transformation.
We extend the 2D rotary position embedding (RoPE) \cite{RoPE} to work with 3D medical images.

For a $d$-dimensional query or key vector $\bm{a}=(a_0, a_1, \dots, a_{d-1})$ in an attention block, the vanilla RoPE, primarily designed for sequence data, rotates each pair \((a_{2i}, a_{2i+1})\) (where \(i \in \{0, 1, \cdots, \frac{d}{2} - 1\}\)) on the 2D plane by an angle of \(\omega_it\) (in radian), where \(t\) is the position index and it is commonly chosen that \(\omega_i=b^{-\frac{2i}{d}}\) with a base $b$.
This can be represented by matrix multiplication $\mathcal{R}(t) \bm{a}$, the transformation matrix is defined as
\begin{equation}
    \mathcal{R}(t)=\mathrm{diag}\left(R(\omega_0 t), \dots, R(\omega_{\frac{d}{2}-1}t)\right),
\end{equation}
where $R(\theta)$ is defined as the 2D rotation matrix that rotates an angle of $\theta$:
\begin{equation}
    R(\theta) = 
    \begin{bmatrix}
        \cos \theta & -\sin \theta \\
        \sin \theta & \cos \theta
    \end{bmatrix}.
\end{equation}

When handling 2D image data, for a given position on a 2D image coordinated with $(t_x, t_y)$, 2D RoPE extends 1D RoPE by rotating the first half of the vector $\bm{a}$ according to $t_x$ and the second half according to $t_y$.
The corresponding transformation matrix \(\mathcal{R}(t_x, t_y)\) is given by:
\begin{equation}
    \mathcal{R}(t_x, t_y) =     
    \begin{bmatrix} 
        \mathcal{R}(t_x) & \\
         & \mathcal{R}(t_y) 
    \end{bmatrix}.
\end{equation}


Both vanilla 1D RoPE and extended 2D RoPE preserves the following properties. Take 2D RoPE for example:
\begin{enumerate}
    \item \textbf{Relativity}: for two positions $(t_x^{(1)}, t_y^{(1)})$ and $(t_x^{(2)}, t_y^{(2)})$, it can be easily verified that $\mathcal{R}^\mathsf{T}(t_x^{(1)}, t_y^{(1)}) \mathcal{R}(t_x^{(2)}, t_y^{(2)}) = \mathcal{R}(t_x^{(2)} - t_x^{(1)}, t_y^{(2)} - t_y^{(1)})$.
        Consequently, for two vectors $\bm{a}$ and $\bm{b}$, the result of their inner product after applying RoPE is a quantity dependent on the relative position $(t_x^{(2)} - t_x^{(1)}, t_y^{(2)} - t_y^{(1)})$:
        \begin{align}
         &\left(\mathcal{R}(t_x^{(1)}, t_y^{(1)})\bm{a}\right)^\mathsf{T} \mathcal{R}(t_x^{(2)}, t_y^{(2)})\bm{b} \\
        = & \bm{a}^\mathsf{T} \mathcal{R}^\mathsf{T}(t_x^{(1)}, t_y^{(1)}) \mathcal{R}(t_x^{(2)}, t_y^{(2)}) \bm{b} \\
        = & \bm{a}^\mathsf{T} \mathcal{R}(t_x^{(2)} - t_x^{(1)}, t_y^{(2)} - t_y^{(1)}) \bm{b}.
        \end{align}
    \item \textbf{Reversibility}: $\mathcal{R}(t_x^{(1)}, t_y^{(1)})=\mathcal{R}(t_x^{(2)}, t_y^{(2)})$ if and only if \((t_x^{(1)}, t_y^{(1)}) = (t_x^{(2)}, t_y^{(2)})\), as long as the value of each $\omega_i$ are appropriately chosen.
\end{enumerate}

\subsection{Additive 3D RoPE}

We aim to extend RoPE to 3D images to handle the 3D coordinate \((t_x, t_y, t_z)\), while maintaining the aforementioned properties of relativity and reversibility.
Furthermore, to better utilize the pre-trained weights of 2D models, we also seek to ensure that when the relative position on the \(z\)-axis is 0, the results are equal to the 2D counterparts.
Specifically, the transformation matrix at $(t_x, t_y, t_z)$ is given as:

\begin{equation}
    \mathcal{R}(t_x, t_y, t_z) =     
    \begin{bmatrix} 
        \mathcal{R}_{x, z}(t_x, t_z) & \\
         & \mathcal{R}_{y, z}(t_y, t_z) 
    \end{bmatrix},
\end{equation}
where
\begin{equation}
    \begin{aligned}
        & \forall a \in \{x, y\}, \mathcal{R}_{a, z}(t_{a}, t_z) = \\
        &\mathrm{diag}\left(
            T(\omega_{a, 0} t_a + \omega_{z, 0} t_z),
            \dots,
            T(\omega_{a, \frac{d}{2}-1}t_a + \omega_{z, \frac{d}{2}-1}t_z)
        \right).
    \end{aligned}
\end{equation}
In our experiments, we set $\omega_{x, i} = b_x^{-\frac{2i}{d}}, \omega_{y, i} = b_y^{-\frac{2i}{d}}, \omega_{z, i} = b_z^{-\frac{2i + 1}{d}}$, where $b_x = b_y = 10000, b_z = 2333$.
We show with our specifically chosen $\omega_{x, i}, \omega_{y, i}, \omega_{z, i}$, such approach maintains the properties of relativity and reversibility.

\paragraph{Relativity}
Obviously, we only need to show \(\mathcal{R}_{x, z}(\cdot, \cdot)\) satisfies relativity. For each diagonal entry, we have:
\begin{align}
     &R^\mathsf{T}(\omega_{x, i} t_x^{(1)} + \omega_{z, i} t_z^{(1)})
        R(\omega_{x, i} t_x^{(2)} + \omega_{z, i} t_z^{(2)}) \\
    = &R(\omega_{x, i} (t_x^{(2)} - t_x^{(1)}) + \omega_{z, i} (t_z^{(2)} -t_z^{(1)})).
\end{align}
Therefore, \(\mathcal{R}^{\mathsf{T}}_{x, z}(t_x^{(1)}, t_z^{(1)})\mathcal{R}_{x, z}(t_x^{(2)}, t_z^{(2)}) = \mathcal{R}_{x, z}(t_x^{(2)} - t_x^{(1)}, t_z^{(2)} - t_z^{(1)})\).

\paragraph{Reversibility}

Similarly, we only need to show \(\mathcal{R}_{x, z}(\cdot, \cdot)\) satisfies reversibility.
Note that the values we specifically choose for $\omega_{x, i}, \omega_{y, i}, \omega_{z, i}$ are algebraic numbers, the following theorem helps prove the result.

\begin{theorem}
    If both $\omega_x$ and $\omega_z$ are non-zero algebraic numbers, and $\frac{\omega_x}{\omega_z}$ is an irrational number, then for any $(t_x^{(1)}, t_z^{(1)}), (t_x^{(2)}, t_z^{(2)}) \in \mathbb{Z}^2$ where $(t_x^{(1)}, t_z^{(1)}) \neq (t_x^{(2)}, t_z^{(2)})$,
    it holds that \(\mathcal{R}_{x, z}(t_x^{(1)}, t_z^{(1)}) \ne \mathcal{R}_{x, z}(t_x^{(2)}, t_z^{(2)})\).
\end{theorem}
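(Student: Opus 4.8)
The plan is to reduce the matrix inequality to an elementary statement about a single rotation angle, and then to settle it with a two-case argument that invokes the hypotheses of irrationality and algebraicity one at a time. First I would recall that two planar rotation matrices coincide, $R(\theta_1) = R(\theta_2)$, if and only if $\theta_1 - \theta_2 \in 2\pi\mathbb{Z}$, and that here $\mathcal{R}_{x,z}(t_x, t_z) = R(\omega_x t_x + \omega_z t_z)$ (for the relevant diagonal block). Setting $a = t_x^{(1)} - t_x^{(2)} \in \mathbb{Z}$ and $b = t_z^{(1)} - t_z^{(2)} \in \mathbb{Z}$, which are not both zero by assumption, the claim becomes: $\omega_x a + \omega_z b \notin 2\pi\mathbb{Z}$. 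So suppose, toward a contradiction, that $\omega_x a + \omega_z b = 2\pi k$ for some $k \in \mathbb{Z}$.

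Next I would split on whether $k = 0$. If $k = 0$, then $\omega_x a + \omega_z b = 0$; if $a = 0$ this forces $\omega_z b = 0$, hence $b = 0$ since $\omega_z \neq 0$, contradicting $(a,b) \neq (0,0)$, and symmetrically $b = 0$ is impossible, so $a$ and $b$ are both nonzero and $\omega_x/\omega_z = -b/a \in \mathbb{Q}$, contradicting the irrationality hypothesis. If $k \neq 0$, I would use that a $\mathbb{Z}$-linear combination of algebraic numbers is algebraic: thus $\omega_x a + \omega_z b$ is algebraic, whereas $2\pi k$ is transcendental (were it algebraic, so would be $\pi = (2\pi k)/(2k)$, contradicting Lindemann's theorem on the transcendence of $\pi$), and an algebraic number cannot equal a transcendental one. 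Either way we reach a contradiction, which proves $\mathcal{R}_{x,z}(t_x^{(1)}, t_z^{(1)}) \neq \mathcal{R}_{x,z}(t_x^{(2)}, t_z^{(2)})$. For the full block-diagonal $\mathcal{R}_{x,z}$ of the preceding subsection, injectivity follows as soon as some one block already distinguishes the two positions, so the same argument applies verbatim to any index $i$ with $\omega_{x,i}/\omega_{z,i}$ irrational.

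The only non-elementary ingredient, and thus the ``hard part,'' is the transcendence of $\pi$ used in the $k \neq 0$ branch; everything else is bookkeeping. I would also remark that the two hypotheses play complementary and individually indispensable roles: irrationality of $\omega_x/\omega_z$ is exactly what kills the $k = 0$ case, and algebraicity of $\omega_x$ and $\omega_z$ is exactly what kills the $k \neq 0$ case. Finally, to conclude reversibility of the additive 3D RoPE with the chosen bases $b_x = b_y = 10000$, $b_z = 2333$, it suffices to observe that at $i = 0$ the ratio is $\omega_{x,0}/\omega_{z,0} = 2333^{1/d}$, which is irrational since $2333$ is prime and $d \geq 2$, so the hypotheses of the theorem are indeed met.
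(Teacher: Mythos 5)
Your proposal is correct and follows essentially the same route as the paper's own proof: reduce to the scalar statement that $\omega_x a + \omega_z b \notin 2\pi\mathbb{Z}$ for $(a,b)\neq(0,0)$, then argue by contradiction with the case $k=0$ handled by the irrationality of $\omega_x/\omega_z$ and the case $k\neq 0$ handled by the algebraicity of $\mathbb{Z}$-linear combinations of algebraic numbers together with the transcendence of $\pi$. Your closing remarks on the block-diagonal structure and on verifying the hypotheses for the concrete choice $b_z=2333$ are correct additions but do not change the argument.
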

\begin{proof}
    The proposition is equivalent to:
    \begin{equation}
        \omega_{x} t_x^{(1)} + \omega_{z} t_z^{(1)} \not\equiv \omega_{x} t_x^{(2)} + \omega_{z} t_z^{(2)} \pmod{2\pi}.
    \end{equation}

    Proof by contradiction. Assume there exists $(t_x^{(1)}, t_z^{(1)}), (t_x^{(2)}, t_z^{(2)}) \in \mathbb{Z}^2$ and $(t_x^{(1)}, t_z^{(1)}) \neq (t_x^{(2)}, t_z^{(2)})$, \(k \in \mathbb{Z}\) such that
    \begin{equation}\label{eq:RoPE-Reversibility-proof}
        2k\pi = \omega_{x} (t_x^{(2)} - t_x^{(1)}) + \omega_{z} (t_z^{(2)} - t_z^{(1)}).
    \end{equation}

    If $k = 0$, then $t_x^{(2)} \neq t_x^{(1)}$ (otherwise, from $0 = \omega_{x} (t_x^{(2)} - t_x^{(1)}) + \omega_{z} (t_z^{(2)} - t_z^{(1)})$, we would also have $t_z^{(2)} = t_z^{(1)}$, which contradicts $(t_x^{(1)}, t_z^{(1)}) \neq (t_x^{(2)}, t_z^{(2)})$).
    Therefore, from Equation~\ref{eq:RoPE-Reversibility-proof} we have:
    \begin{equation}
        \frac{\omega_x}{\omega_z} = -\frac{t_z^{(2)} - t_z^{(1)}}{t_x^{(2)} - t_x^{(1)}} \in \mathbb{Q},
    \end{equation}
    which contradicts the condition that $\frac{\omega_x}{\omega_z}$ is irrational.
    
    If $k \neq 0$, since the product and sum of algebraic numbers are still algebraic, from Equation~\ref{eq:RoPE-Reversibility-proof}, we have \\
    $\pi = \dfrac{\omega_{x} (t_x^{(2)} - t_x^{(1)}) + \omega_{z} (t_z^{(2)} - t_z^{(1)})}{2k}$ is also algebraic. However, this contradicts the well-known fact that $\pi$ is transcendental.
\end{proof}

\subsection{Experiment}

To validate the effectiveness of the proposed Additive 3D RoPE, we computed the rotation angles corresponding to all positions on images of a typical size. 
Note that the $x$ and $y$ dimensions are symmetric in Additive 3D RoPE, therefore, in our experiments, we set the length of the $y$ dimension to 1, and only focus on the $x$ and $z$ dimensions.
Accordingly, for each $0 \leq i \leq d/4 - 1$ where $d$ is the dimension of the query or key vector $\bm{a}$), we examined the distribution of these angles in terms of $\bmod 2\pi$ representation.

For image size to be validated, we set it as $8 \times 16 \times 1 (z \times x \times y)$.
Notice that that when the z-coordinate $t_z = 0$, the rotation angle becomes $\omega_{x, i} t_x + \omega_{z, i} t_z = \omega_{x, i} t_x$, which is equivalent to 2D RoPE.
Therefore, we also choose 2D image size of $1 \times 16 \times 1 (z \times x \times y)$ for comparison with 2D RoPE.

For each $0 \leq i \leq d/4 -1$, we consider the results of rotating angles of all positions modulo $2\pi$ to obtain $\Theta_i$. 
Subsequently, we calculate the differences between all adjacent elements after sorting $\Theta_i$, resulting in $\Delta_i$. 
Finally, we calculate the minimum value among these differences as $\min \Delta_i$.
The formal definitions of \(\Theta_i, \Delta_i\) are as follows:
\begin{alignat*}{3}
    \Theta_i &= \{ (\omega_{x, i} t_x + \omega_{z, i} t_z ) \bmod 2\pi \mid 0 \le t_x < x, 0 \le t_z < z \} \\
        &= \{ \theta_1, \theta_2, \cdots, \theta_{xz} \} \quad (0 \le \theta_1 \le \theta_2 \le \cdots \le \theta_{xz} < 2\pi) \\
    \Delta_i &= \{ \theta_{i+1} - \theta_{i} \mid 1 \le i \le xz-1 \} \cup \{ \theta_0 + 2\pi - \theta_{xz}\} 
\end{alignat*}

In addition, we also attempt to use the Euclidean distance between the positions after rotating $(0, 1) \in \mathbb{R}^2$ by angles of $\alpha$ and $\beta$ as an indicator to quantify the "difference" between these two rotation angles, as follows:
\begin{align*}
    D_i = \{ & \sqrt{(\cos \theta_{i+1} - \cos \theta_{i})^2 + (\sin \theta_{i+1} - \sin \theta_{i})^2} \\
            &\mid 1 \le i \le xz\},
\end{align*}
where \(\theta_{xz+1}\) is defined as \(\theta_1\).

Since there is a considerable number of $\theta_i$ values, the distribution of them on the unit circle $x^2 + y^2 = 1$ is relatively dense.
In this scenario, $\sqrt{(\cos \theta_{i+1} - \cos \theta_{i})^2 + (\sin \theta_{i+1} - \sin \theta_{i})^2} \approx (\theta_{i+1} - \theta_{i}) \bmod 2\pi$, implying that the arc length corresponding to this central angle will be approximately equal to the chord length.
Consequently, the results obtained from these two evaluation metrics will be quite close.
For simplicity, only the differences of adjacent angles are presented below.

\begin{figure}[htbp]
    \centering
    \includegraphics[width=\linewidth]{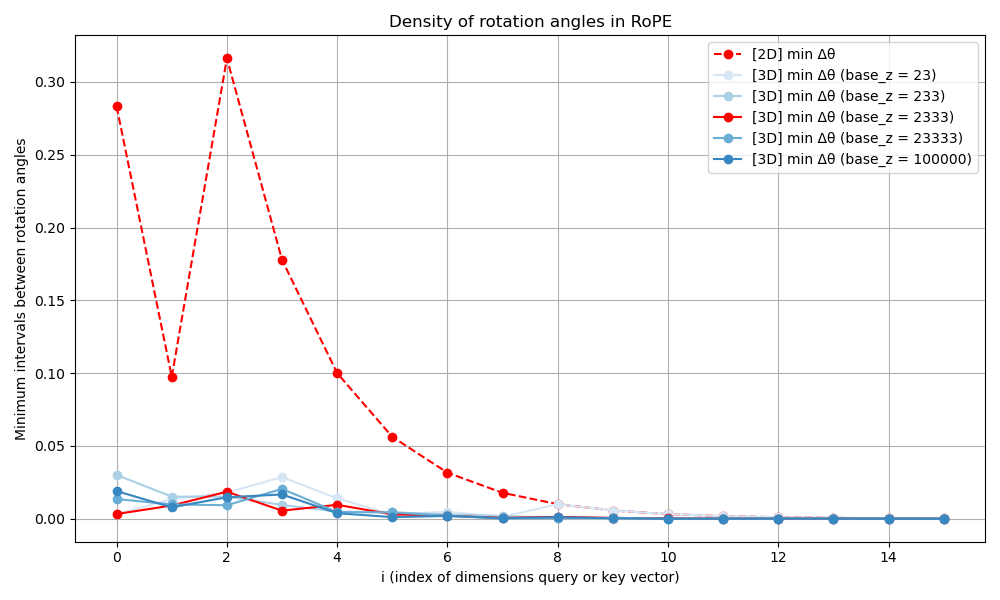}
    \caption{
        Minimum intervals between rotation angles for different $i$.
        $i$ is the index of dimensions $d$ of query or key vector $\bm{a}$.
    }
    \label{fig:rope-exp}
\end{figure}

It can be observed that when using Additive 3D RoPE on 3D images, the minimum intervals between rotation angles are approximately in the range of $10^{-3}$ to $10^{-5}$, which preserves good numerical precision with 32bit floating point numbers.
As for the comparison with 2D RoPE, the average result of $\frac{\min \Delta_i^{(2D)}}{\min \Delta_i^{(3D)}}$ for all $i$ is 26.30, which benefits 3D medical images processing.
By setting $b_z < b_x$, which implies $\omega_{z, i} > \omega_{x, i}$, Additive 3D RoPE gains anisotropy, allowing it to carry cross-plane positional information.
The effect of this choice on the distribution of rotation angles is that rotation angles for positions with the same $z$ coordinate tend to cluster together, while positions with different $z$ coordinates in the entire image give rise to distinct clusters of rotation angles.
Thus, the difference between our results and the ideal ratio of 16.00 for both cases with uniform distribution reflects the anisotropic information.

As shown in Figure~\ref{fig:rope-exp}, we can also observe the relationship between $b_z$ and $\min \Delta_i$: 
if $b_z$ is large, indicating that $\omega_{z, i}$ is small, all rotation angles are densely distributed, resulting in a smaller $\min \Delta_i$; 
whereas if $b_z$ is small, although $\min \Delta_i$ is better, larger $\omega_{z, i}$ can lead to rotation angles for positions that are far apart become too close in terms $\bmod 2\pi$, which contradicts the relativity pursued by RoPE. 
In conclusion, we finally set $b_z = 2333$ in our work.

\begin{table*}[htbp]
    \centering
    \fontsize{9}{11}\selectfont
    \begin{tabular}{l|cccc}
    \toprule
    \textbf{Configuration} & UniMiSS & SMIT & EVA-02-B & Ours \\
    \midrule
    initial learning rate & \multicolumn{4}{c}{5e-8 / 1e-7 / 5e-7 / 1e-6 / 5e-6 / 1e-5 / 5e-5 / 1e-4 / 5e-4} \\
    lr decay factor & \multicolumn{4}{c}{0.1} \\
    training epochs & \multicolumn{4}{c}{100} \\
    lr schedule & \multicolumn{4}{c}{lr step at epoch 50, 75} \\
    batch size & \multicolumn{4}{c}{32} \\
    optimizer & \multicolumn{4}{c}{AdamW \cite{loshchilov2018decoupled}} \\
    weight decay & \multicolumn{4}{c}{1e-2} \\
    optimizer momentum & \multicolumn{4}{c}{$\beta_{1}=0.9, \beta_{2}=0.999$} \\
    fine-tuning input size & $96 \times 96 \times 96$ & $96 \times 96 \times 96$ & $28 \times 224 \times 224$ & $160 \times 160 \times 160$ \\
    \midrule
    pre-training input size & $16 \times 96 \times 96$ & $96 \times 96 \times 96$ & $224 \times 224$ & $80 \times 160 \times 160$ \\
    \bottomrule
    \end{tabular}
    \caption{Classification fine-tuning settings}
    \label{table:cfg}
\end{table*}

\section{Interpretation of PDR}

In fact, there exists a constructive interpretation when using PDR with the soft token representation.
Suppose the encoder outputs a $D \times H \times W$ grid of token distributions, let \(G\) denote the set of discrete coordinates of all cells within the grid.
For each position \(s \in G\) on the grid, we define a random variable $I_s$ denoting the index of the token in the codebook at \(s\) and its distribution to be \(q^{(s)}\).
We define another random variable $S$ representing a random position on the grid that is uniformly distributed over $G$.
It can be shown that the averaged distribution of all token distributions on the grid is exactly the distribution of $I_S$:
\begin{align}
    \P(I_S=i) 
    &= \sum_{s \in G_{D, H, W}} \P(I_S=i \mid S=s)\P(S=s) \\
    &= \frac{1}{DHW}\sum_{s \in G_{D, H, W}} \P(I_s=i).
\end{align}
Intuitively, when the distribution of $I_S$ is close to the uniform distribution, individual tokens occur at a randomly sampled position with close probabilities, which benefits the diversity of the learned distributions.

\section{Pre-training Data and Pre-processing}

From each dataset, we exclude any corrupted data or those lacking consistent spacing information.
During pre-processing, for each image, we determine its depth dimension, i.e., the dimension along which the image is scanned, and move the depth dimension to the first spatial dimension.
The depth dimension is determined based on the following priorities: 1) adhering to official data description if available; 2) extracting from image metadata if available; 3) using the dimension with the most anisotropic spacing compared to other dimensions.
We crop the image to the region with intensities above zero for RGB and gray scale images, and above the 0.5 percentile for others.
If the shorter in-plane size is greater than 512, we resize the image to have a shorter in-plane size of 512 with area interpolation.
The full list of datasets and the amount of data in each dataset used for pre-training are presented in Table~\ref{table:datasets}.
The number of slices for 3D images is counted along the depth dimension.
In total, 48,344 3D images and 348,991 2D images, amounting to 9,096,684 slices, are used for pre-training.

\section{More Reconstruction Results}

We show more reconstruction results by SPAD-VT in Figure~\ref{fig:tokeninzer-eval-more}.
SPAD-VT can reconstruct various anatomical structure with diverse imaging modalities and spatial properties.

\section{Classification Experiments}


Our experimental settings on MedMNIST v2 largely follow the official baselines \cite{medmnistv2}, including the regularization for the Adrenal and Vessel datasets. 
For fair comparisons, we search for optimal learning rate when fine-tuning each model on each dataset.
To utilize the pre-trained weights of models, we adopt the following adjustments.
First, for each model, each input image is interpolated into the size that is compatible with the input size used by the model during pre-training.
Second, the 3D images of MedMNIST v2 have only one channel. For each model that is pre-trained on images with three channels (such as RGB images), we duplicate each input image along the channel dimension into three channels.
The detailed recipe is provided in Table~\ref{table:cfg}.


\section{Segmentation Experiments}

\subsection{Experimental Settings}

For experiments on BTCV, a $48 \times 192 \times 192$ patch from each input image during training for all models but Swin UNETR.
The model structure of Swin UNETR only supports input size that is divisible by 32, and we thus follow the official implementation of Swin UNETR\footnote{\url{https://github.com/Project-MONAI/research-contributions/tree/main/SwinUNETR/BTCV}} to use a crop size of $96 \times 96 \times 96$ specifically for it.
Following Swin UNETR, we use a soft tissue window of $[-175, 250]$ for clipping the CT intensities, then rescale intensities to [-1, 1].
For experiments on CHAOS, the crop size during training is $32 \times 192 \times 192$ for all models.
For each MRI image, we normalize it by subtracting the mean of intensities and dividing standard deviation of intensities.
For both tasks, all models are trained for 50K steps and segmentation results are obtained by sliding window inference with a overlap ratio of 0.75.
The remained hyper-parameters settings for each model follow the corresponding official implementation.

\subsection{Evaluation Metrics}

Given a prediction set of points \(\tilde{V}\) and a reference set \(V\), we evaluate the segmentation results using the Dice coefficient, average symmetric surface distance (ASSD), and Hausdorff distance (HD) as follows:
\small
\begin{gather}
    \mathrm{Dice}(\tilde{V}, V) =
    \frac{2 \lvert \tilde{V} \cap V \rvert}{\lvert V \rvert + \lvert \tilde{V} \rvert }, \\
    \begin{aligned}
        \mathrm{ASSD}(\tilde{V}, V) &= \\
        \frac{1}{\lvert \partial V \rvert + \lvert \partial \tilde{V} \rvert}
        &\left(
            \sum_{s \in \partial V} d(s,\partial \tilde{V}) + 
            \sum_{s \in \partial \tilde{V}} d(s, \partial V)
        \right),
    \end{aligned} \\
    \mathrm{HD}(\tilde{V}, V) =
    \max \left \{
        \max_{s \in \partial V} d(s,\partial \tilde{V}),
        \max_{s \in \partial \tilde{V}} d(s,\partial V)
    \right \},
\end{gather}
\normalsize
where \(\partial V\) is the boundary of \(V\), and
\begin{equation}
    d(s, V) = \min_{t \in V} d(s, t),
\end{equation}
where \(d(s, t)\) is the Euclidean distance between points \(s\) and \(t\).
More specific details of the evaluation metrics can be found in the official documentation of the CHAOS challenge\footnote{\url{https://github.com/emrekavur/CHAOS-evaluation/blob/master/CHAOS_Metrics_and_Evaluation_Method.pdf}}.
For results on BTCV validation set, we use the MONAI's implementation for the metrics to evaluate them.
For results on CHAOS test set, we submit them to the challenge\footnote{\url{https://chaos.grand-challenge.org}} and report the evaluation results returned by the website.

\newpage

\begin{figure*}[htbp]
    \centering
    \includegraphics[width=\linewidth]{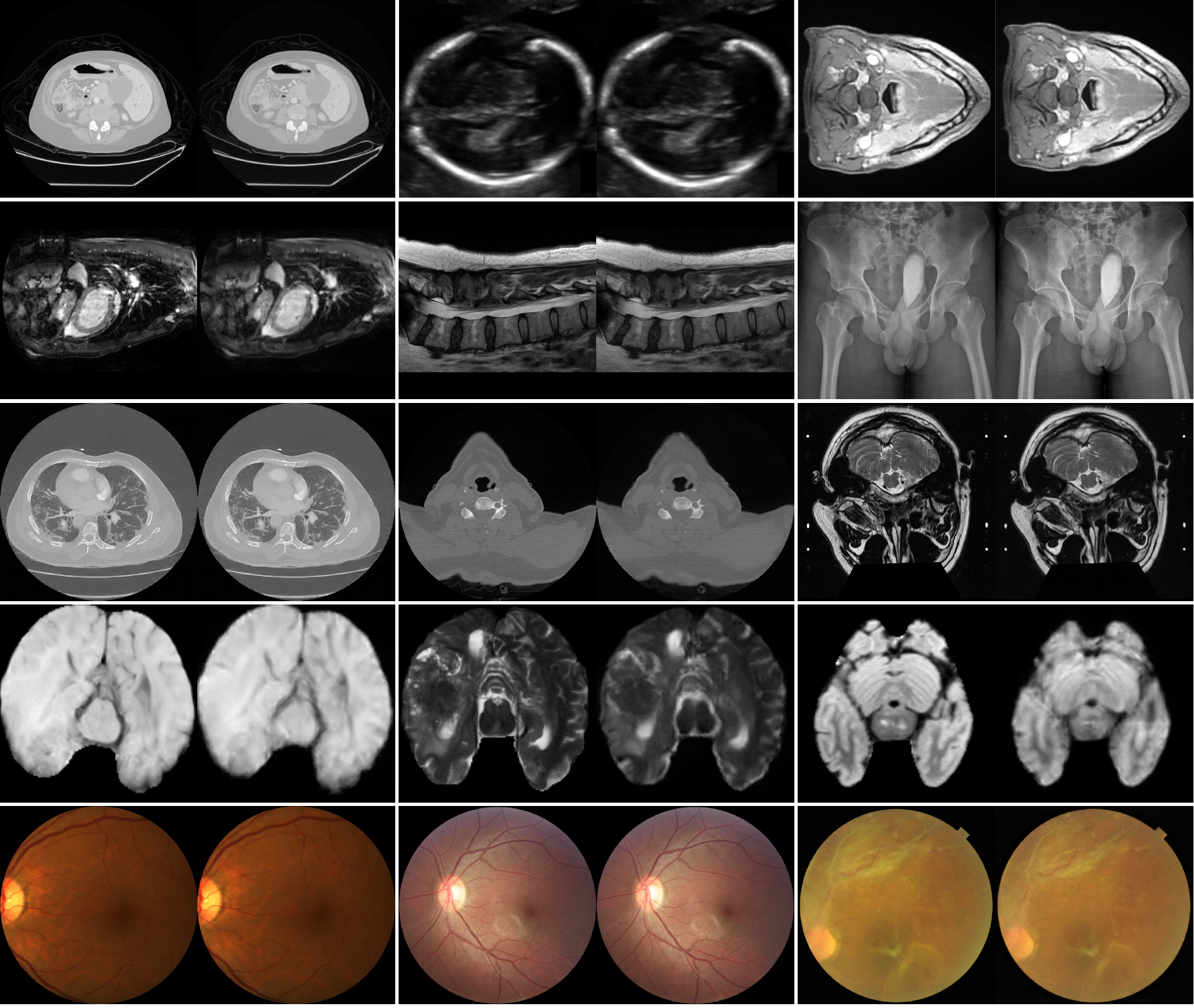}
    \caption{
        More visual tokenizer reconstruction results on unseen images.
        The first row:
            (left) abdomen CT image MSD-Pancreas;
            (middle) fetal ultrasound image from HC18;
            (right) head-and-neck MRI image from HaN-Seg.
        The second row:
            (left): heart MRI image from MSD-Heart;
            (middle): abdomen CT image from FLARE 2022;
            (right): pelvix X-ray image from PelviXNet.
        The third row:
            (left): lung image from STOIC;
            (middle): spine CT image from RSNA-2022-CSFD;
            (right): brain MRI image from crossMoDA2022.
        The forth row:
            (from left to right) T1-weighted, T2-weighted, and T2-weighted FLAIR brain MRI images from BraTS-MEN.
        The fifth row:
            (from left to right) fundus photograph from iChallenge-ADAM, iChallenge-REFUGE2, and RFMiD, respectively.
    }
    \label{fig:tokeninzer-eval-more}
\end{figure*}

\newpage

\onecolumn

\LTXtable{\textwidth}{datasets.tex}

\twocolumn

\end{document}